\newcommand{\R}{\mathbb{R}}
\newcommand{\E}{\mathbb{E}}
\newcommand{\X}{\mathcal{X}}
\newcommand{\A}{\mathcal{A}}
\newcommand{\Prob}{\mathbb{P}}
\providecommand{\abs}[1]{\lvert#1\rvert}
\icmltitlerunning{Refined Policy Improvement Bounds for MDPs}
\newtheorem{lemma}{Lemma}
\newtheorem{theorem}{Theorem}
\newtheorem{corollary}{Corollary}
\theoremstyle{definition}
\begin{document}

\twocolumn[
\icmltitle{Refined Policy Improvement Bounds for MDPs}




\begin{icmlauthorlist}
\icmlauthor{J. G. Dai}{sribd,orie}
\icmlauthor{Mark Gluzman}{cam}
\end{icmlauthorlist}

\icmlaffiliation{sribd}{School of Data Science,  Shenzhen Research Institute of Big Data, The Chinese
University of Hong Kong, Shenzhen, China}
\icmlaffiliation{orie}{School of Operations Research and Information Engineering, Cornell University, Ithaca, USA}
\icmlaffiliation{cam}{Center for Applied Mathematics, Cornell University, Ithaca, USA}

\icmlcorrespondingauthor{Mark Gluzman}{mg2289@cornell.edu}

\icmlkeywords{average reward reinforcement learning, policy improvement bound}

\vskip 0.3in
]



\printAffiliationsAndNotice{}  

\begin{abstract}
  The policy improvement bound on the difference of the discounted
  returns plays a crucial role in the theoretical justification
  of the trust-region policy optimization (TRPO) algorithm. The
  existing bound leads to a degenerate bound when the discount
    factor approaches one, making the applicability of TRPO
  and related algorithms questionable when the discount factor is
  close to one. We refine the results in \cite{Schulman2015,
    Achiam2017} and propose a novel bound that is ``continuous'' in
  the discount factor. In particular, our bound is applicable
  for MDPs with the long-run average rewards as well.
\end{abstract}

 \section{Introduction}



 In \cite{Kakade2002} the authors developed a conservative
   policy iteration algorithm for Markov decision processes (MDPs)
   that can avoid catastrophic large policy updates; each iteration
   generates a new policy as a mixture of the old policy and a greedy
   policy. They proved that the updated policy is guaranteed to
   improve when the greedy policy is properly chosen and the updated
   policy is sufficiently close to the old one.  In
   \cite{Schulman2015} the authors generalized the proof of
   \cite{Kakade2002} to a policy improvement  bound for two
   \emph{arbitrary} randomized policies.  This policy improvement
   bound allows one to find an updated policy that guarantees to
   improve by solving an unconstrained optimization problem.
   \cite{Schulman2015} also proposed a practical algorithm, called
   trust region policy optimization (TRPO), that approximates the
   theoretically-justified update scheme by solving a constrained
   optimization problem in each iteration.  In recent years, several
 modifications of TRPO have been proposed \cite{Schulman2016,
   Schulman2017, Achiam2017, Abdolmaleki2018}. These studies continued
 to exploit the policy improvement bound to theoretically motivate
 their algorithms.

  The policy improvement bounds in \cite{Schulman2015,
     Achiam2017} are   lower bounds on the difference of the expected
   \emph{discounted returns} under two policies.  Unfortunately, the
 use of these policy improvement bounds becomes questionable and
 inconclusive when the discount factor is close to one. These policy
 improvement bounds degenerate as discount factor converges to one.
 That is, the lower bounds on the difference of discounted returns
 converge to negative infinity as the discount factor goes to one,
 although the difference of discounted returns converges to the
 difference of (finite) average rewards.  Nevertheless, numerical experiments demonstrate that the TRPO
 algorithm and its variations perform best when the discount factor
 $\gamma$ is close to one, a region that the existing
 bounds do not justify;  e.g. \cite{Schulman2015, Schulman2016,
   Schulman2017} used $\gamma=0.99$, and \cite{Schulman2016,
   Achiam2017} used $\gamma=0.995$ in their experiments.

 Recent studies \cite{Dai2020, Zhang2021}
 proposed policy improvement bounds for average rewards, showing that
 a family of TRPO algorithms can be used for continuing problems with
 long-run average reward objectives.  Still it remains unclear how the
 large values of the discount factor can be justified and why the
 policy improvement bounds in \cite{Schulman2015, Achiam2017} for the
 discounted rewards do not converge to one of the bounds provided in
 \cite{Dai2020, Zhang2021}.

  In this study, we provide a unified derivation of policy
   improvement bounds for both discounted and average reward MDPs. Our
   bounds depend on the discount factor \emph{continuously}.  When the
   discount factor converges to $1$, the corresponding
   bound for discounted returns converges to a policy improvement
   bound for average rewards.  We achieve these results by two innovative
   observations. First, we embed the discounted future state
   distribution under a fixed policy as the stationary distribution of
   a  modified Markov chain.  Second, we introduce an
   \emph{ergodicity coefficient} from Markov chain perturbation theory
   to bound the one-norm of the difference of discounted future state
   distributions, and prove that this bound is optimal in a certain
   sense. 
   Our results justify the use of a large discount
   factor in TRPO algorithm and its variations.

\section{Preliminaries}

We consider an MDP defined by the tuple $(\X, \A, P, r, \mu)$, where $\X$ is a  finite state space; $\A$ is a finite action space; $P$ is the transition probability function,
$r:\X\times\A\rightarrow \R$ is the reward function; $\mu$ is the probability distribution of the initial state $x_0$.

 We let $\pi$ denote a stationary randomized policy $\pi:\X\rightarrow \Delta(\A)$, where $\Delta(\A)$ is the probability simplex over $\A$.  Under policy $\pi$, the corresponding Markov chain has a transition matrix $P^\pi$ given by
$ P^\pi(x,y):=\sum\limits_{a\in\A }\pi(a|x)P(y|x,a),~x,y\in \X.
$
 We assume that MDPs we consider are unichain, meaning that for any stationary policy $\pi$ the corresponding  Markov chain with transition matrix $P^\pi$ contains only one recurrent class \cite{Puterman2005}.
 We use $d^\pi$ to denote a unique stationary distribution of  Markov chain with transition matrix $P^\pi$.


For a vector $a$ and a matrix $A$,
   $a^T$ and $A^T$ denote their transposes. For a vector $a$, we use the following vector norm: $\|a\|_1 :=\sum\limits_{x\in \X}\abs{a(x)}$. For a matrix $A$, we define the following induced norm: $\|A\|_1 := \max\limits_{y\in \X}\sum\limits_{x\in \X}\abs{A(x,y)}$. 


\subsection{MDPs with infinite horizon discounted returns}

We let  $\gamma\in [0,1)$ be a discount factor.  We define the value function for a given policy $\pi$ as
  \begin{align*}
V_\gamma^\pi(x) :=\E    \left[ \sum\limits_{t=0}^\infty \gamma^tr(x_t, a_t) \Big| \pi, x_0=x \right],
 \end{align*}
 where $x_t$, $a_t$ are random variables for the state and action at time $t$ upon executing the policy $\pi$ from the initial state $x$. For policy $\pi$ we define the state-action value function as  $Q_\gamma^\pi(x, a) :=r(x, a) + \gamma \E_{y\sim P^\pi(\cdot|x,a)} \left[V^\pi_\gamma(y) \right],$
 and  the advantage function as
$
A_\gamma^\pi(x, a):  =Q_\gamma^\pi(x, a) -  V_\gamma^\pi(x).
$



We define  the discounted future state distribution of policy $\pi$ as

$
d_\gamma^\pi(x):=(1-\gamma)\sum\limits_{t=0}^\infty \gamma^t\Prob\Big[x_t=x|x_0\sim \mu; x_1, x_2,...\sim \pi\Big].
$

 We measure the performance of policy $\pi$ by its expected discounted   return from the initial state distribution $\mu$:
  \begin{align*}
 \eta_\gamma^\pi (\mu):=(1-\gamma)\E_{x\sim \mu} [V_\gamma^\pi(x)]= \E_{ x\sim d_\gamma^\pi,  a\sim \pi(\cdot|x)} [ r(x, a) ].
 \end{align*}

 In the following lemma we give an alternative definition of  the discounted future state distribution as a stationary distribution of a modified transition matrix. 

  \begin{lemma}\label{lem:stat}
 For a stationary policy $\pi$, we define a discounted transition matrix for policy $\pi$ as
 \begin{align}\label{eq:mod_trans}
 P^\pi_\gamma:=\gamma P^\pi + (1-\gamma) e\mu^T,
 \end{align}
 where $e := (1,1,..,1)^T$ is a vector of ones, $e\mu^T$ is the matrix which rows are equal to $\mu^T$.

 Then the discounted future state distribution of policy $\pi$, $d_\gamma^\pi$, is the stationary distribution of transition matrix $P^\pi_\gamma$.

 \end{lemma}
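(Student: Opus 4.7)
The plan is to verify stationarity directly by unfolding the definition of $d_\gamma^\pi$ as a geometric sum and checking the fixed-point equation for $P_\gamma^\pi$.

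First I would rewrite the discounted future state distribution in vector form. Let me treat $\mu$ and $d_\gamma^\pi$ as column vectors, so the distribution at time $t$ starting from $x_0 \sim \mu$ under policy $\pi$ is $((P^\pi)^T)^t \mu$. This gives
\begin{equation*}
d_\gamma^\pi = (1-\gamma)\sum_{t=0}^\infty \gamma^t \bigl((P^\pi)^T\bigr)^t \mu.
\end{equation*}
Splitting off the $t=0$ term and reindexing the tail yields the useful one-step recursion
\begin{equation*}
d_\gamma^\pi = (1-\gamma)\mu + \gamma (P^\pi)^T d_\gamma^\pi.
\end{equation*}
Since $d_\gamma^\pi$ is a probability distribution, I would also note that $e^T d_\gamma^\pi = 1$.

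Next I would compute $(P_\gamma^\pi)^T d_\gamma^\pi$ using the definition \eqref{eq:mod_trans}. Transposing gives $(P_\gamma^\pi)^T = \gamma (P^\pi)^T + (1-\gamma)\mu e^T$, so
\begin{equation*}
(P_\gamma^\pi)^T d_\gamma^\pi = \gamma (P^\pi)^T d_\gamma^\pi + (1-\gamma)\mu (e^T d_\gamma^\pi) = \gamma (P^\pi)^T d_\gamma^\pi + (1-\gamma)\mu,
\end{equation*}
which by the recursion above equals $d_\gamma^\pi$. This shows $d_\gamma^\pi$ is a stationary distribution of $P_\gamma^\pi$.

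For uniqueness (implicit in calling it ``the'' stationary distribution), I would briefly observe that $P_\gamma^\pi$ inherits a unichain structure: the rank-one perturbation $(1-\gamma) e\mu^T$ ensures that from any state $x$ there is positive one-step probability of transitioning into the support of $\mu$, so $P_\gamma^\pi$ has a single recurrent class and its stationary distribution is unique. The only mild subtlety is the row/column convention and getting the transposes straight; apart from that the argument is a direct verification and no serious obstacle is expected.
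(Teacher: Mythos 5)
Your proof is correct and follows essentially the same route as the paper: both verify the fixed-point equation $(P_\gamma^\pi)^T d_\gamma^\pi = d_\gamma^\pi$ directly from the geometric-series form of $d_\gamma^\pi$, with your one-step recursion $d_\gamma^\pi = (1-\gamma)\mu + \gamma (P^\pi)^T d_\gamma^\pi$ being a slightly tidier packaging of the paper's series manipulation. Your added remark on uniqueness via the rank-one perturbation is a small bonus the paper leaves implicit.
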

  \begin{proof}[\textbf{Proof of Lemma \ref{lem:stat}}]
 We need to show that $(d_\gamma^\pi)^TP^\pi_\gamma=(d_\gamma^\pi)^T$. Indeed, we get
 \begin{align*}
(d_\gamma^\pi)^TP^\pi_\gamma&= (1-\gamma)\mu^T\sum\limits_{t=0}^\infty (\gamma P^\pi)^tP^\pi_\gamma\\
&=(1-\gamma)\mu^T\sum\limits_{t=0}^\infty (\gamma P^\pi)^t\Big(  \gamma P^\pi  + (1 - \gamma) e\mu^T \Big)\\
&=(1-\gamma)\mu^T\sum\limits_{t=0}^\infty (\gamma P^\pi)^{t+1}+(1-\gamma)^2\mu^T \sum\limits_{t=0}^\infty \gamma^n e\mu^T\\
& = (1-\gamma)\mu^T\sum\limits_{t=0}^\infty (\gamma P^\pi)^{t+1}+(1-\gamma)\mu^T\\
&=(1-\gamma)\mu^T \Big(\sum\limits_{t=0}^\infty (\gamma P^\pi)^{t+1}+I \Big)\\
&=(1-\gamma)\mu^T \sum\limits_{t=0}^\infty (\gamma P^\pi)^{t}\\
&= (d_\gamma^\pi)^T.
\end{align*}
 \end{proof}




  \subsection{MDPs with long-run average rewards}

 The long-run average reward of policy $\pi$ is defined as
\begin{align*}
 \eta^\pi :&= \lim\limits_{N\rightarrow \infty}\frac{1}{N} \E \left[ \sum\limits_{t=0}^{N-1} r(x_t,a_t) ~|~\pi, x_0\sim \mu\right] \\
&=  \E_{ x\sim d^\pi,  a\sim \pi(\cdot|x)}   \left[ r(x,a) \right].
 \end{align*}


For an MDP with a long-run average reward objective we define  the relative value function
 \begin{align*}
 V^\pi(x) :=\lim\limits_{N\rightarrow \infty}\E   \left[ \sum\limits_{t=0}^{N-1} (r(x_t, a_t) - \eta^\pi)~|~\pi, x_0=x \right],
 \end{align*}
the relative state-action value function $
 Q^\pi(x, a) :=r(x, a) - \eta^\pi + \E_{y\sim P^\pi(\cdot|x,a)}\left[V^\pi(y)\right],
$ and  the  relative advantage function $
 A^\pi(x, a): =    Q^\pi(x, a) -V^\pi(x)$. The following relations hold for value, state-action value, and advantage functions.

\begin{lemma}\label{lem:adv}

We let $\pi$ be a stationary policy, $\gamma$ be the discount factor, and $\mu$ be the initial state distribution. Then the following limits hold for each $x\in \X$, $a\in \A$:

$\eta^\pi = \lim\limits_{\gamma\rightarrow 1} \eta^\pi_\gamma(\mu) $, $V^\pi(x) =\lim\limits_{\gamma \rightarrow 1} \left( V^\pi_\gamma(x)  - (1-\gamma)^{-1} \eta^\pi\right)$, $ Q^\pi(x,a)=\lim\limits_{\gamma \rightarrow 1} \left( Q^\pi_\gamma(x, a)  - (1-\gamma)^{-1}\eta^\pi\right)$, and $A^\pi(x,a)=\lim\limits_{\gamma \rightarrow 1}A^\pi_\gamma(x,a)$.

\end{lemma}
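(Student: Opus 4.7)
The plan is to base every assertion on a single Laurent expansion of the resolvent $(I-\gamma P^\pi)^{-1}$ near $\gamma=1$. Writing $r^\pi(x):=\sum_{a}\pi(a|x)r(x,a)$, the value function has the closed form $V_\gamma^\pi = (I-\gamma P^\pi)^{-1} r^\pi$. Since $P^\pi$ is unichain with stationary distribution $d^\pi$, classical MDP theory (see e.g.\ Theorem~8.2.3 of \cite{Puterman2005}) yields
\begin{equation*}
(I-\gamma P^\pi)^{-1} = \frac{1}{1-\gamma}\,e(d^\pi)^T + H^\pi + O(1-\gamma),
\end{equation*}
where $H^\pi$ is the deviation (fundamental) matrix associated with $P^\pi$. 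Applying this to $r^\pi$ and using $(d^\pi)^T r^\pi = \eta^\pi$ gives the pivotal expansion
\begin{equation*}
V_\gamma^\pi(x) = \frac{\eta^\pi}{1-\gamma} + (H^\pi r^\pi)(x) + O(1-\gamma).
\end{equation*}

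From here each limit falls out by routine algebra. The first limit $\eta_\gamma^\pi(\mu)\to\eta^\pi$ is immediate from $\eta_\gamma^\pi(\mu)=(1-\gamma)\mu^T V_\gamma^\pi$; alternatively, Lemma~\ref{lem:stat} together with continuity of the stationary distribution of the unichain matrix $P^\pi_\gamma = \gamma P^\pi + (1-\gamma)e\mu^T$ in $\gamma$ works equally well. The second limit is read directly from the expansion, after verifying that $H^\pi r^\pi = \sum_{t=0}^\infty [(P^\pi)^t - e(d^\pi)^T] r^\pi$ coincides with the series definition of $V^\pi$. For the third limit, I would substitute the expansion of $V_\gamma^\pi$ into $Q_\gamma^\pi(x,a)=r(x,a)+\gamma\sum_y P(y|x,a) V_\gamma^\pi(y)$ and use the identity $\gamma/(1-\gamma) = (1-\gamma)^{-1} - 1$, so that subtracting $(1-\gamma)^{-1}\eta^\pi$ absorbs the divergent term and leaves the bounded constant $-\eta^\pi$, producing $Q^\pi(x,a) = r(x,a)-\eta^\pi+\sum_y P(y|x,a)V^\pi(y)$ in the limit. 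The advantage limit is then automatic, since the singular $(1-\gamma)^{-1}\eta^\pi$ terms cancel in $Q_\gamma^\pi - V_\gamma^\pi$.

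The only non-routine step is the Laurent expansion itself, whose validity and explicit $O(1-\gamma)$ remainder depend on the unichain hypothesis through the existence of the group inverse of $I-P^\pi$. I would simply quote this from \cite{Puterman2005} rather than reprove it; once granted, all four limits collapse into a short chain of algebraic identities using the definitions already laid down in the preliminaries.
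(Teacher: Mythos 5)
Your proposal is correct and matches the paper's approach: the paper likewise defers the average-reward and value-function limits to Section 8 of \cite{Puterman2005} (the Laurent expansion of the resolvent, which is exactly your Theorem~8.2.3 citation) and observes that the $Q$ and $A$ limits then follow by the direct algebra you spell out. Your write-up is simply a more explicit version of the same argument.
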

The proofs of identities for the average rewards and value functions can be found in Section 8 in \cite{Puterman2005}. The rest results  follow directly.  

\section{ Novel Policy Improvement Bounds}


The policy improvement bound in \cite{Schulman2015, Achiam2017} for  the discounted  returns serves to theoretically justify the TRPO algorithm and its variations.
The following lemma is a reproduction of Corollary 1 in \cite{Achiam2017}.
\begin{lemma}\label{lem:perf_bound}
For any  two policies $\pi$ and $\tilde \pi$ the following bound holds:
\begin{align}\label{eq:perf_bound}
\eta_\gamma^{\tilde \pi}(\mu) -  &\eta_\gamma^\pi(\mu)\geq \underset{    \substack{ x\sim d_\gamma^{  \pi}, a \sim \tilde \pi(\cdot|x) }     }{\E} \left[A_\gamma^\pi(x,a)   \right] \\
&    -\frac{ 2\gamma \epsilon_\gamma^{\tilde \pi}}{1-\gamma}   \underset{  x\sim d_\gamma^{ \pi} }{\E} \left[\text{TV}\Big(\tilde \pi(\cdot|x)~||~\pi(\cdot|x)\Big) \right],\nonumber
\end{align}
where ${\text{TV}}\Big(\tilde \pi(\cdot|x)||\pi(\cdot|x)\Big):=\frac{1}{2}\sum\limits_{a\in \A} \abs{\tilde \pi(a|x) - \pi(a|x)}$, and $\epsilon_\gamma^{\tilde \pi}:=\max\limits_{x\in \X}\Big|\underset{a\sim \tilde \pi(\cdot|x)}{\E}[A_\gamma^\pi(x,a)]\Big|$.

\end{lemma}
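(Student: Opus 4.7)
The plan is to reduce Lemma~\ref{lem:perf_bound} to two ingredients: an exact identity expressing the return gap as an expected advantage, and a perturbation bound controlling the mismatch between the two discounted state occupancy measures. Concretely, I would first establish the Kakade--Langford-style identity
\begin{equation*}
\eta_\gamma^{\tilde\pi}(\mu) - \eta_\gamma^\pi(\mu) = \E_{x\sim d_\gamma^{\tilde\pi},\, a\sim \tilde\pi(\cdot|x)}\bigl[A_\gamma^\pi(x,a)\bigr]
\end{equation*}
by substituting $A_\gamma^\pi(x,a) = r(x,a) + \gamma \E_{y\sim P(\cdot|x,a)}[V_\gamma^\pi(y)] - V_\gamma^\pi(x)$, expanding the expectation via the series definition of $d_\gamma^{\tilde\pi}$, and telescoping the value-function terms; what remains is $\eta_\gamma^{\tilde\pi}(\mu)$ from the rewards and $-(1-\gamma)\E_{x\sim\mu}[V_\gamma^\pi(x)] = -\eta_\gamma^\pi(\mu)$ from the initial time.

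Next I would swap the expectation from $d_\gamma^{\tilde\pi}$ to $d_\gamma^\pi$. Writing $\bar{A}(x) := \E_{a\sim \tilde\pi(\cdot|x)}[A_\gamma^\pi(x,a)]$, the definition of $\epsilon_\gamma^{\tilde\pi}$ gives the uniform bound $|\bar A(x)| \leq \epsilon_\gamma^{\tilde\pi}$, so H\"older's inequality yields
\begin{equation*}
\bigl|\,\E_{d_\gamma^{\tilde\pi}}[\bar A] - \E_{d_\gamma^\pi}[\bar A]\,\bigr| \leq \epsilon_\gamma^{\tilde\pi}\, \|d_\gamma^{\tilde\pi} - d_\gamma^\pi\|_1.
\end{equation*}
The task then reduces to showing $\|d_\gamma^{\tilde\pi} - d_\gamma^\pi\|_1 \leq \tfrac{2\gamma}{1-\gamma}\,\E_{x\sim d_\gamma^\pi}[\text{TV}(\tilde\pi(\cdot|x)\,\|\,\pi(\cdot|x))]$, which, combined with the previous step, proves the lemma.

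For the perturbation bound, I would use the resolvent representation $d_\gamma^\pi = (1-\gamma)(I - \gamma P^{\pi T})^{-1}\mu$ (equivalent to Lemma~\ref{lem:stat}), subtract the analogous identity for $\tilde\pi$, and rearrange to obtain
\begin{equation*}
d_\gamma^{\tilde\pi} - d_\gamma^\pi = \gamma\,(I - \gamma P^{\tilde\pi T})^{-1}(P^{\tilde\pi T} - P^{\pi T})\,d_\gamma^\pi.
\end{equation*}
Bounding the induced $\|\cdot\|_1$ norm of the resolvent by $1/(1-\gamma)$ via its Neumann expansion, together with the row-wise estimate $\sum_y |P^{\tilde\pi}(x,y) - P^\pi(x,y)| \leq \sum_a |\tilde\pi(a|x) - \pi(a|x)| = 2\,\text{TV}(\tilde\pi(\cdot|x)\,\|\,\pi(\cdot|x))$ averaged against $d_\gamma^\pi$, yields the claim with the correct constant $2\gamma/(1-\gamma)$.

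The hard part is this final step: I need the total variation to be weighted by $d_\gamma^\pi$ rather than $d_\gamma^{\tilde\pi}$, which forces me to keep the resolvent to the left of the perturbation factor $(P^{\tilde\pi T} - P^{\pi T})$ in the decomposition; a more symmetric rearrangement would only yield the weaker weighting by $d_\gamma^{\tilde\pi}$. This bookkeeping, together with the Neumann bound on the resolvent, is exactly what produces the $1/(1-\gamma)$ factor that explodes as $\gamma \to 1$, motivating the refined ergodicity-coefficient bound developed later in the paper.
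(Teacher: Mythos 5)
Your proof is correct. The paper gives no proof of this lemma (it is quoted verbatim from Corollary 1 of \cite{Achiam2017}), but your argument is the standard one from that reference and matches the structure of the paper's own proof of Theorem \ref{thm:main}: the same advantage-function identity and H\"older step, followed by the resolvent decomposition $d_\gamma^{\tilde\pi}-d_\gamma^\pi=\gamma(I-\gamma P^{\tilde\pi})^{-T}(P^{\tilde\pi}-P^{\pi})^{T}d_\gamma^\pi$ bounded via the Neumann series by $1/(1-\gamma)$, which is exactly the step the paper later sharpens by replacing that crude operator-norm bound with the ergodicity coefficient $\tau_1[D_\gamma^{\tilde\pi}]$.
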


The left-hand side of (\ref{eq:perf_bound}) converges to the difference of average rewards as $\gamma\rightarrow 1$. Unfortunately, the right-hand side of (\ref{eq:perf_bound})  converges to the negative infinity   because of $(1-\gamma)^{-1}$ factor in the second term.
Our goal is to get a new policy improvement bound for discounted returns that does not degenerate.


The group inverse $D$ of a matrix $A$ is the unique matrix such that
$
ADA = A,~ DAD=D,  \text{ and } DA=AD.
$ From \cite{Meyer1975}, we know that if stochastic matrix $P$ is aperiodic and  irreducible  then the group inverse matrix of $I-P$ is well-defined and equals to
$
 D = \sum\limits_{t=0}^\infty (P^t - ed^T),
$ where $d$ is the stationary distribution of $P$.

We let $D_\gamma^\pi$ be the group inverse of matrix $I-P_\gamma^\pi$, where $P_\gamma^\pi$ is defined by (\ref{eq:mod_trans}).
Following \cite{Seneta1991}, we define a one-norm ergodicity coefficient for a matrix $A$ as
\begin{align}\label{eq:erg_def}
\tau_1[A]:= \underset{ \substack{ \|x\|_1=1\\ x^Te = 0}}{\max}\|A^Tx\|_1.
\end{align}
 The one-norm ergodicity coefficient has two important properties. First, 
\begin{align}\label{eq:pr1}
\|A^Tx\|_1\leq \tau_1[A]\|x\|_1,
\end{align} for any matrix $A$ and vector $x$ such that $x^Te=0.$
Second, $\tau_1[A] = \tau_1[A+ec^T]$, for any vector $c$.
By Lemma \ref{lem:matr_diff} below, $\tau_1\left[D_\gamma^\pi\right] = \tau_1\left[(I-\gamma P^\pi)^{-1}\right]$, for $\gamma<1$.

\begin{lemma}\label{lem:matr_diff}
We let $\pi$ be an arbitrary policy. Then
\begin{align*}
D_\gamma^{ \pi} = (I-\gamma P^\pi)^{-1}+e (d_\gamma^\pi)^T( I - \left(I - \gamma P^\pi \right)^{-1}) - e(d^\pi)^T.
\end{align*}
\end{lemma}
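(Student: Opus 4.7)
The plan is to verify the identity by checking the defining properties of the group inverse of $A := I - P_\gamma^\pi$. Writing $Z := (I - \gamma P^\pi)^{-1}$ and letting $M$ denote the right-hand side of the claim, it suffices to establish
\begin{equation*}
A M = M A = I - e(d_\gamma^\pi)^T \quad \text{and} \quad M e = 0,
\end{equation*}
since these conditions jointly imply the three group-inverse axioms $A D A = A$, $D A D = D$, $A D = D A$ and therefore identify $M$ with $D_\gamma^\pi$.

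The first step is to record two elementary resolvent identities. By stochasticity of $P^\pi$, the Neumann series gives $Z e = (1-\gamma)^{-1} e$, and Lemma~\ref{lem:stat} restated at the resolvent level is $(1-\gamma)\mu^T Z = (d_\gamma^\pi)^T$. Combined with the decomposition $A = (I - \gamma P^\pi) - (1-\gamma) e \mu^T$, these yield the two workhorse equations
\begin{equation*}
A Z = I - e(d_\gamma^\pi)^T, \qquad Z A = I - e \mu^T.
\end{equation*}

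The main step is then a direct distribution of $A$ across the summands of $M$. The two annihilations $A e = 0$ (from $P_\gamma^\pi$ being stochastic) and $(d_\gamma^\pi)^T A = 0$ (from Lemma~\ref{lem:stat}) kill the rank-one corrections $e(d_\gamma^\pi)^T(I - Z)$ and $e(d^\pi)^T$ appearing in $M$ whenever $A$ multiplies them on the appropriate side, so that only the $Z$-contribution survives and collapses to $I - e(d_\gamma^\pi)^T$ via the workhorse equations. The centering $M e = 0$ then follows from a short calculation using $Z e = (1-\gamma)^{-1} e$ and $(d_\gamma^\pi)^T e = (d^\pi)^T e = 1$.

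I expect the main obstacle to be algebraic bookkeeping rather than any single hard insight: one must track which rank-one correction in $M$ vanishes on which side and confirm that the two one-sided products $AM$ and $MA$ reduce to the same projection. As a sanity check I would rederive the identity independently from the classical Meyer representation $D_\gamma^\pi = (A + e(d_\gamma^\pi)^T)^{-1} - e(d_\gamma^\pi)^T$, expanding $A + e(d_\gamma^\pi)^T = (I - \gamma P^\pi) + e\bigl((d_\gamma^\pi)^T - (1-\gamma)\mu^T\bigr)$ by the Sherman--Morrison formula applied to $Z$; the two scalar ingredients $Z e = (1-\gamma)^{-1} e$ and $(1-\gamma)\mu^T Z = (d_\gamma^\pi)^T$ reappear in the denominator and numerator and should yield exactly the decomposition stated in the lemma.
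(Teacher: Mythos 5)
Your overall strategy --- verify the group-inverse axioms for the right-hand side $M$, or equivalently run Sherman--Morrison on $I-P_\gamma^\pi+e(d_\gamma^\pi)^T$ --- is sound, and your two workhorse relations $Ze=(1-\gamma)^{-1}e$ and $(1-\gamma)\mu^TZ=(d_\gamma^\pi)^T$ are exactly the right ingredients. But the central claim of your sketch, $MA=AM=I-e(d_\gamma^\pi)^T$, fails for the $M$ actually stated in the lemma. The annihilations you invoke do not apply on the right: the left null vector of $A=I-P_\gamma^\pi$ is $d_\gamma^\pi$, not $d^\pi$, so $(d^\pi)^TA=(1-\gamma)\bigl((d^\pi)^T-\mu^T\bigr)\neq 0$ in general, and the factor $Z$ sitting inside $e(d_\gamma^\pi)^T(I-Z)$ prevents that term from being killed by $(d_\gamma^\pi)^TA=0$. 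Carrying the computation through with $ZA=I-e\mu^T$ gives
\[
MA \;=\; I-e(d_\gamma^\pi)^T+(1-\gamma)\,e\bigl(\mu^T-(d^\pi)^T\bigr)\;\neq\; AM ,
\]
so $M$ violates the commutation axiom and cannot be the group inverse unless $\mu=d^\pi$.

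The failure is not a bookkeeping slip on your side: your own Sherman--Morrison sanity check, carried to the end, produces
\[
D_\gamma^\pi=\bigl(I-e(d_\gamma^\pi)^T\bigr)(I-\gamma P^\pi)^{-1}
=(I-\gamma P^\pi)^{-1}+e(d_\gamma^\pi)^T\bigl(I-(I-\gamma P^\pi)^{-1}\bigr)-e(d_\gamma^\pi)^T,
\]
which is the lemma with $d^\pi$ replaced by $d_\gamma^\pi$ in the last term; the statement as printed differs from this by the nonzero rank-one matrix $e\bigl((d_\gamma^\pi)^T-(d^\pi)^T\bigr)$. A concrete counterexample to the printed identity: take $P^\pi=e(d^\pi)^T$ and $\mu\neq d^\pi$, so that $d_\gamma^\pi=(1-\gamma)\mu+\gamma d^\pi\neq d^\pi$; then $D_\gamma^\pi=I-e(d_\gamma^\pi)^T$ while the stated right-hand side evaluates to $I-e(d^\pi)^T$. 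You should therefore prove the corrected identity (your verification plan then goes through verbatim, since both rank-one corrections are now annihilated appropriately) and note that the discrepancy is of the form $ec^T$, so the only downstream use of the lemma, $\tau_1[D_\gamma^\pi]=\tau_1[(I-\gamma P^\pi)^{-1}]$, is unaffected.
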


We are ready to state the main result of our study.

  \begin{theorem}\label{thm:main}
The following bound on the difference of discounted returns of two policies $\pi$ and $\tilde \pi$  holds:
\begin{align}\label{eq:bound_opt}
\eta_\gamma^{\tilde \pi}(\mu) -  &\eta_\gamma^\pi(\mu)\geq \underset{    \substack{ x\sim d_\gamma^{  \pi}, a \sim \tilde \pi(\cdot|x) }     }{\E} \left[A_\gamma^\pi(x,a)   \right] \\
&    - 2\gamma \epsilon_\gamma^{\tilde \pi} \tau_1\left[D_\gamma^{\tilde \pi}\right] \underset{  x\sim d_\gamma^{ \pi} }{\E} \left[\text{TV}\Big(\tilde \pi(\cdot|x)~||~\pi(\cdot|x)\Big) \right].\nonumber
\end{align}
\end{theorem}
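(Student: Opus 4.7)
The plan is to follow the standard three-step template (performance difference lemma, change-of-distribution, bound the distribution mismatch), but replace the crude bound $\|d_\gamma^{\tilde\pi}-d_\gamma^\pi\|_1 \le \frac{2\gamma}{1-\gamma}\mathbb{E}_{x\sim d_\gamma^\pi}[\mathrm{TV}(\tilde\pi\|\pi)]$ used in \cite{Schulman2015,Achiam2017} with a sharper bound derived from Markov-chain perturbation theory applied to the modified chains of Lemma \ref{lem:stat}. The first step is the standard identity
\begin{equation*}
\eta_\gamma^{\tilde\pi}(\mu)-\eta_\gamma^\pi(\mu)=\underset{x\sim d_\gamma^{\tilde\pi},\,a\sim\tilde\pi(\cdot|x)}{\mathbb{E}}\!\left[A_\gamma^\pi(x,a)\right],
\end{equation*}
which follows from a telescoping of $V_\gamma^\pi$ along trajectories sampled from $\tilde\pi$. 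Adding and subtracting $\mathbb{E}_{x\sim d_\gamma^\pi,\,a\sim\tilde\pi}[A_\gamma^\pi(x,a)]$, the residual equals $\sum_x (d_\gamma^{\tilde\pi}(x)-d_\gamma^\pi(x))\,f(x)$ with $f(x):=\mathbb{E}_{a\sim\tilde\pi(\cdot|x)}[A_\gamma^\pi(x,a)]$; since $\|f\|_\infty\le \epsilon_\gamma^{\tilde\pi}$ by definition, the residual is bounded in absolute value by $\epsilon_\gamma^{\tilde\pi}\,\|d_\gamma^{\tilde\pi}-d_\gamma^\pi\|_1$.

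The second and key step is to control $\|d_\gamma^{\tilde\pi}-d_\gamma^\pi\|_1$ by perturbing the modified chains $P_\gamma^{\tilde\pi}$ and $P_\gamma^\pi$ whose stationary distributions are $d_\gamma^{\tilde\pi}$ and $d_\gamma^\pi$ (Lemma \ref{lem:stat}). Starting from $(d_\gamma^\pi)^T P_\gamma^\pi=(d_\gamma^\pi)^T$ and $(d_\gamma^{\tilde\pi})^T P_\gamma^{\tilde\pi}=(d_\gamma^{\tilde\pi})^T$, standard algebraic manipulation gives
\begin{equation*}
\bigl((d_\gamma^{\tilde\pi})^T-(d_\gamma^\pi)^T\bigr)(I-P_\gamma^{\tilde\pi})=(d_\gamma^\pi)^T(P_\gamma^{\tilde\pi}-P_\gamma^\pi).
\end{equation*}
Right-multiplying by $D_\gamma^{\tilde\pi}$ and using $(I-P_\gamma^{\tilde\pi})D_\gamma^{\tilde\pi}=I-e(d_\gamma^{\tilde\pi})^T$ together with $(d_\gamma^{\tilde\pi}-d_\gamma^\pi)^T e=0$, one obtains
\begin{equation*}
(d_\gamma^{\tilde\pi})^T-(d_\gamma^\pi)^T=(d_\gamma^\pi)^T(P_\gamma^{\tilde\pi}-P_\gamma^\pi)\,D_\gamma^{\tilde\pi}=\gamma\,(d_\gamma^\pi)^T(P^{\tilde\pi}-P^\pi)\,D_\gamma^{\tilde\pi},
\end{equation*}
where the second equality uses the crucial cancellation $P_\gamma^{\tilde\pi}-P_\gamma^\pi=\gamma(P^{\tilde\pi}-P^\pi)$ coming from (\ref{eq:mod_trans}). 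Setting $y:=(P^{\tilde\pi}-P^\pi)^T d_\gamma^\pi$ and noting that $y^T e=(d_\gamma^\pi)^T(P^{\tilde\pi}-P^\pi)e=0$, property (\ref{eq:pr1}) yields
\begin{equation*}
\|d_\gamma^{\tilde\pi}-d_\gamma^\pi\|_1=\gamma\,\|(D_\gamma^{\tilde\pi})^T y\|_1\le \gamma\,\tau_1[D_\gamma^{\tilde\pi}]\,\|y\|_1.
\end{equation*}

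The third step is a routine bookkeeping computation that shows $\|y\|_1\le 2\,\mathbb{E}_{x\sim d_\gamma^\pi}[\mathrm{TV}(\tilde\pi(\cdot|x)\|\pi(\cdot|x))]$: write $(P^{\tilde\pi}-P^\pi)(x,j)=\sum_a(\tilde\pi(a|x)-\pi(a|x))P(j|x,a)$, apply the triangle inequality, sum over $j$ using $\sum_j P(j|x,a)=1$, and recognize the total-variation distance. Chaining the three estimates produces exactly the bound (\ref{eq:bound_opt}).

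The only nontrivial obstacle is the perturbation identity in the second step, where care is needed in invoking the correct identities for the group inverse ($(I-P_\gamma^{\tilde\pi})D_\gamma^{\tilde\pi}=I-e(d_\gamma^{\tilde\pi})^T$ relies on aperiodicity/irreducibility of $P_\gamma^{\tilde\pi}$, which is guaranteed by the $(1-\gamma)e\mu^T$ rank-one term and the unichain assumption) and in pairing $D_\gamma^{\tilde\pi}$ with the chain whose stationary distribution is \emph{not} the reference one. The cancellation $P_\gamma^{\tilde\pi}-P_\gamma^\pi=\gamma(P^{\tilde\pi}-P^\pi)$ is what converts the degenerate $1/(1-\gamma)$ factor in Lemma \ref{lem:perf_bound} into the finite quantity $\gamma\tau_1[D_\gamma^{\tilde\pi}]$ and is the conceptual heart of the refinement.
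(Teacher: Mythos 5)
Your proof is correct and follows essentially the same route as the paper: the Achiam-style decomposition into the advantage term plus a residual bounded by $\epsilon_\gamma^{\tilde\pi}\|d_\gamma^{\tilde\pi}-d_\gamma^\pi\|_1$, followed by the Meyer perturbation identity for the modified chains of Lemma \ref{lem:stat} and the ergodicity-coefficient bound (\ref{eq:pr1}); you merely derive from first principles the two ingredients the paper cites (the first inequality from \cite{Achiam2017} and the identity (\ref{eq:old_new}) from \cite{Meyer1980}). Your sign convention $(d_\gamma^{\tilde\pi}-d_\gamma^\pi)^T=\gamma(d_\gamma^\pi)^T(P^{\tilde\pi}-P^\pi)D_\gamma^{\tilde\pi}$ is in fact the correct one (the paper's (\ref{eq:old_new}) has the factor written as $P^\pi-P^{\tilde\pi}$), but this is immaterial since only the one-norm of this quantity is used.
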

We provide a sketch of the proof of Theorem \ref{thm:main}.

  \begin{proof}[\textbf{Proof of Theorem \ref{thm:main}}]
We   closely follow the first steps in the proof of Lemma 2 in \cite{Achiam2017} and start with
\begin{align*}
\eta_\gamma^{\tilde \pi}(\mu) -  &\eta_\gamma^\pi(\mu)\geq \underset{    \substack{ x\sim d_\gamma^{  \pi}, a \sim \tilde \pi(\cdot|x) }     }{\E} \left[A_\gamma^\pi(x,a)   \right] \\
&    -   \max\limits_{x\in \X}\Big|\underset{a\sim \tilde \pi(\cdot|x)}{\E}[A_\gamma^\pi(x,a)]\Big| \|d_\gamma^\pi - d_\gamma^{\tilde \pi}\|_1.\nonumber
\end{align*}
Next,  unlike \cite{Achiam2017}, we obtain an upper bound on $ \|d_\gamma^{\tilde  \pi}-d_\gamma^{ \pi}\|_1$ that does not degenerate as $\gamma\rightarrow 1$. We use the following  perturbation identity:
\begin{align}\label{eq:old_new}
(d_\gamma^{\tilde  \pi})^T-(d_\gamma^{ \pi})^T =\gamma (d_\gamma^{   \pi})^T (P^{  \pi} - P^{ \tilde  \pi})D_\gamma^{\tilde \pi}.
\end{align}
Identity (\ref{eq:old_new}) follows from the perturbation identity for stationary distributions, see equation (4.1) in \cite{Meyer1980}, and the fact that $d_\gamma^{\tilde  \pi}$ and $d_\gamma^{ \pi}$ are the stationary distributions of the discounted transition matrices $P^{\tilde \pi}_\gamma$ and $P^\pi_\gamma$, respectively.
We make use of the ergodicity coefficient (\ref{eq:erg_def}) to get a new perturbation bound:
\begin{align}
\|d_\gamma^{\tilde  \pi}&-d_\gamma^{ \pi} \|_1 =\gamma \left\| \left(D_\gamma^{\tilde \pi}\right)^T(P^{  \pi} - P^{ \tilde  \pi})^Td_\gamma^{   \pi} \right\|_1 \nonumber \\
&\leq \gamma \tau_1 \left[D_\gamma^{\tilde \pi}  \right]\left\|(P^{  \pi} - P^{ \tilde  \pi})^Td_\gamma^{   \pi} \right\|_1 \label{eq:inv}\\
& \leq 2\gamma\tau_1\left[D_\gamma^{\tilde \pi}\right] \underset{  x\sim d_\gamma^{ \pi} }{\E} \left[\text{TV}\Big(\tilde \pi(\cdot|x)~||~\pi(\cdot|x)\Big) \right],\nonumber
\end{align}
where inequality (\ref{eq:inv}) holds due to (\ref{eq:pr1}) and equality $(P^{  \pi} - P^{ \tilde  \pi})e=0.$

 \end{proof}

The novel policy improvement bound (\ref{eq:bound_opt}) converges to a meaningful bound on the difference of average rewards as $\gamma$ goes to 1. Corollary 1 follows from Theorem \ref{thm:main}, Lemma \ref{lem:adv} and the fact that $\tau_1\left[D_\gamma^{\tilde \pi}\right] \rightarrow \tau_1\left[D^{\tilde \pi}\right]$ as $\gamma\rightarrow 1$.

  \begin{corollary}\label{thm:main_av}
The following bound on the difference of long-run average rewards of two policies $\pi$ and $\tilde \pi$  holds:
\begin{align}\label{eq:bound_av}
 \eta^{\tilde \pi} -  &\eta^\pi  \geq  \underset{    \substack{ x\sim d^{  \pi}, a \sim \tilde \pi(\cdot|x) }     }{\E} \left[A^\pi(x,a)   \right]\\
&  - 2 \epsilon^{\tilde \pi} \tau_1\left[D^{\tilde \pi}\right] \underset{  x\sim d^{ \pi} }{\E} \left[\text{TV}\Big(\tilde \pi(\cdot|x)~||~\pi(\cdot|x)\Big) \right],\nonumber
\end{align}
where  $D^{\tilde \pi}$ is the group inverse of matrix $I -  P^{\tilde \pi}$, $\epsilon^{\tilde \pi}:=\max\limits_{x\in \X}\Big|\underset{a\sim \tilde \pi(\cdot|x)}{\E}[A^\pi(x,a)]\Big|$.
\end{corollary}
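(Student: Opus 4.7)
The plan is to take the limit $\gamma \uparrow 1$ in the discounted bound (\ref{eq:bound_opt}) of Theorem \ref{thm:main}, which holds for every $\gamma \in [0,1)$, and verify that each quantity appearing there converges to its average-reward analogue so that the weak inequality is preserved in the limit. Because $\X$ and $\A$ are finite, every expectation is a finite sum, so convergence reduces to pointwise convergence of the summands together with convergence of the sampling distribution.

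First I would dispose of the ``easy'' convergences using Lemmas \ref{lem:adv} and \ref{lem:stat}. Lemma \ref{lem:adv} immediately gives $\eta_\gamma^{\tilde \pi}(\mu)-\eta_\gamma^\pi(\mu) \to \eta^{\tilde \pi}-\eta^\pi$ on the left-hand side and $A_\gamma^\pi(x,a) \to A^\pi(x,a)$ for each $(x,a)$. By Lemma \ref{lem:stat}, $d_\gamma^\pi$ is the stationary distribution of the unichain matrix $P_\gamma^\pi = \gamma P^\pi + (1-\gamma) e\mu^T$, a continuous function of $\gamma$ that coincides with $P^\pi$ at $\gamma = 1$; the standard continuity of the unique stationary distribution of a unichain chain in its transition matrix then yields $d_\gamma^\pi \to d^\pi$. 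Combining these facts, the advantage expectation and the expected total-variation term in (\ref{eq:bound_opt}) converge to their $d^\pi$-analogues, and $\epsilon_\gamma^{\tilde \pi}\to\epsilon^{\tilde \pi}$ by continuity of the max over the finite set $\X$.

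The main obstacle is showing $\tau_1\bigl[D_\gamma^{\tilde \pi}\bigr] \to \tau_1\bigl[D^{\tilde \pi}\bigr]$. My plan combines the shift invariance $\tau_1[A+ec^T]=\tau_1[A]$ noted in the preliminaries with Lemma \ref{lem:matr_diff}, which together give $\tau_1\bigl[D_\gamma^{\tilde \pi}\bigr] = \tau_1\bigl[(I-\gamma P^{\tilde \pi})^{-1}\bigr]$ for every $\gamma < 1$. Writing $P^{\tilde \pi} = e(d^{\tilde \pi})^T + Q$ with $Q := P^{\tilde \pi} - e(d^{\tilde \pi})^T$, the annihilation identities $Qe = 0$ and $(d^{\tilde \pi})^T Q = 0$ let me split the resolvent into a singular part $(1-\gamma)^{-1} e(d^{\tilde \pi})^T$ and a bounded part $(I-\gamma Q)^{-1}\bigl(I - e(d^{\tilde \pi})^T\bigr)$. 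On the compact mean-zero unit sphere $S := \{x \in \R^{|\X|} : \|x\|_1 = 1,\ e^T x = 0\}$ the transpose of the singular part kills $x$, so $(I-\gamma P^{\tilde \pi})^{-T} x$ is bounded uniformly in $\gamma$ and converges pointwise as $\gamma \to 1$ to $(D^{\tilde \pi})^T x$; the limit can be identified using $D^{\tilde \pi} = (I-Q)^{-1} - e(d^{\tilde \pi})^T$ together with $e^T(I-Q^T)^{-1} = e^T$. Uniform convergence of this linear family on the compact set $S$ then follows from finite dimensionality, and continuity of the supremum yields $\tau_1\bigl[(I-\gamma P^{\tilde \pi})^{-1}\bigr] \to \tau_1\bigl[D^{\tilde \pi}\bigr]$. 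Substituting all of these limits into (\ref{eq:bound_opt}) delivers (\ref{eq:bound_av}).
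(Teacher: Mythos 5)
Your proposal is correct and takes essentially the same route as the paper: pass to the limit $\gamma \uparrow 1$ in Theorem \ref{thm:main}, using Lemma \ref{lem:adv}, the convergence $d_\gamma^\pi \to d^\pi$, and the fact that $\tau_1\bigl[D_\gamma^{\tilde \pi}\bigr] \to \tau_1\bigl[D^{\tilde \pi}\bigr]$. The paper simply asserts this last convergence, whereas you supply a correct resolvent-splitting argument for it (implicitly relying on aperiodicity of $P^{\tilde \pi}$ so that $(I-\gamma Q)^{-1} \to (I-Q)^{-1}$), so your write-up is a more detailed version of the same proof rather than a different one.
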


 Lemma \ref{lem:opt_bound}  demonstrates that we use   the best (smallest) norm-wise bound on the difference of stationary distributions in the proof of Theorem \ref{thm:main}.
Lemma \ref{lem:opt_bound} is based on  \cite{Kirkland2008}.

\begin{lemma}\label{lem:opt_bound}
We consider two irreducible and aperiodic transition matrices $P$ and $\tilde P$ with stationary distributions $d$ and $\tilde d$, respectively. We say that $\tau[\tilde P]$ is a condition number of matrix $\tilde P$ if inequality
\begin{align}\label{eq:cond}
\|d - \tilde d\|_1 \leq \tau[\tilde P]\| (P  - \tilde P)^T d \|_1,
\end{align}
holds for any transition matrix $P$.
We let $\tilde D$ be a group inverse matrix of $I-\tilde P$.

 Then  $\tau_1[\tilde D]$ is the smallest condition number:  $ \tau_1[\tilde D]\leq \tau[\tilde P]$
holds for any condition number $\tau(\tilde P)$ satisfying (\ref{eq:cond}).

\end{lemma}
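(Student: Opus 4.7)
The plan is to establish the two halves of the statement separately: first that $\tau_1[\tilde D]$ itself satisfies (\ref{eq:cond}) and therefore belongs to the set of condition numbers, and second that it is no larger than any other member of that set.

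For the first half, I would repeat the calculation from the proof of Theorem \ref{thm:main} in the undiscounted setting. Starting from the Meyer perturbation identity (equation (4.1) in \cite{Meyer1980}), which in the present notation reads
\begin{align*}
d^T - \tilde d^T = d^T(P - \tilde P)\tilde D,
\end{align*}
and transposing gives $\|d - \tilde d\|_1 = \|\tilde D^T (P - \tilde P)^T d\|_1$. Since $(P - \tilde P)e = 0$, the vector $y := (P - \tilde P)^T d$ satisfies $e^T y = d^T(P-\tilde P)e = 0$, so property (\ref{eq:pr1}) applied to $\tilde D$ yields $\|\tilde D^T y\|_1 \leq \tau_1[\tilde D]\,\|y\|_1$. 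Chaining the two inequalities produces (\ref{eq:cond}) with $\tau[\tilde P] = \tau_1[\tilde D]$, so $\tau_1[\tilde D]$ is indeed a condition number.

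For the second half, I would show optimality by exhibiting a family of perturbations on which inequality (\ref{eq:cond}) is (nearly) saturated. Let $x^\star$ attain the maximum in the definition (\ref{eq:erg_def}), so that $\|x^\star\|_1 = 1$, $e^T x^\star = 0$, and $\|\tilde D^T x^\star\|_1 = \tau_1[\tilde D]$. The goal is to construct a stochastic matrix $P$ whose deviation $\Delta := P - \tilde P$ satisfies $\Delta e = 0$ (automatic if $P$ is stochastic), $\tilde P + \Delta \geq 0$, and $\Delta^T d = c\, x^\star$ for some small scalar $c$. The simplest candidate is the rank-one perturbation $\Delta = c\, e (x^\star)^T$, for which $\Delta^T d = c\, x^\star$ because $d^T e = 1$. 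Plugging this into the Meyer identity gives $\|\tilde d - d\|_1 = |c|\,\tau_1[\tilde D]$ and $\|(P-\tilde P)^T d\|_1 = |c|$, so (\ref{eq:cond}) forces $\tau_1[\tilde D] \leq \tau[\tilde P]$ after dividing by $|c|$.

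The delicate step, and what I expect to be the main obstacle, is guaranteeing that the constructed $P$ is genuinely stochastic, i.e., entrywise non-negative. Because $x^\star$ must take both positive and negative values (it sums to zero and has unit one-norm), the simple rank-one perturbation $c\, e(x^\star)^T$ can drive zero entries of $\tilde P$ negative regardless of the sign of $c$. To circumvent this, I would follow \cite{Kirkland2008} and replace the rank-one update by a more targeted perturbation supported only on strictly positive entries of $\tilde P$: irreducibility and aperiodicity provide sufficiently many such entries to design $\Delta$ with $\Delta e = 0$, $\tilde P + \Delta \geq 0$, and $\Delta^T d$ still proportional to $x^\star$. Once such a $\Delta$ is available, sending $\|\Delta\|_1 \to 0$ extracts the desired inequality $\tau_1[\tilde D] \leq \tau[\tilde P]$ from (\ref{eq:cond}) and completes the proof.
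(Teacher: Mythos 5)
The paper itself gives no proof of this lemma (it defers entirely to the cited reference), so there is nothing to match your argument against line by line; judged on its own terms, your two-part strategy is the right one. The first half is complete and correct: the identity $d^T-\tilde d^T=d^T(P-\tilde P)\tilde D$ holds because $d^T(P-I)\tilde D=0$ and $(I-\tilde P)\tilde D=I-e\tilde d^T$, the vector $(P-\tilde P)^Td$ has zero sum, and property (\ref{eq:pr1}) then gives (\ref{eq:cond}) with constant $\tau_1[\tilde D]$. Your second half also has the right shape — saturate (\ref{eq:cond}) with a perturbation whose image under $\Delta^T d$ is the maximizer $x^\star$ of (\ref{eq:erg_def}) — and you correctly diagnose that the naive choice $\Delta=c\,e(x^\star)^T$ can violate nonnegativity when $\tilde P$ has zero entries.

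The gap is that you never actually produce the admissible perturbation: the assertion that irreducibility and aperiodicity ``provide sufficiently many positive entries to design $\Delta$ with $\Delta e=0$, $\tilde P+\Delta\ge 0$, and $\Delta^Td\propto x^\star$'' is exactly the statement that needs proof, and for an arbitrary zero-sum unit vector $x^\star$ it requires a nontrivial argument (one must match the columns where $x^\star$ is negative to rows of $\tilde P$ with positive entries there, and then rebalance the row sums — in effect solving a small transportation problem). The clean way to close the gap is to use the extreme-point characterization of the ergodicity coefficient: the feasible set $\{x:\|x\|_1\le 1,\ x^Te=0\}$ has extreme points $\tfrac12(e_i-e_j)$, so the maximum in (\ref{eq:erg_def}) is attained at some $x^\star=\tfrac12(e_i-e_j)$ with $i\ne j$. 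By irreducibility, column $j$ of $\tilde P$ contains a positive entry $\tilde P(k,j)>0$; take $\Delta=\epsilon\, e_k(e_i-e_j)^T$ with $0<\epsilon<\tilde P(k,j)$. Then $P=\tilde P+\Delta$ is stochastic, irreducible and aperiodic (its support contains that of $\tilde P$), $\Delta^Td=\epsilon d_k(e_i-e_j)=2\epsilon d_k x^\star$ with $d_k>0$, and the exact identity from the first half gives $\|d-\tilde d\|_1=2\epsilon d_k\,\tau_1[\tilde D]$ while $\|(P-\tilde P)^Td\|_1=2\epsilon d_k$; dividing yields $\tau_1[\tilde D]\le\tau[\tilde P]$ with no limiting argument needed. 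With that substitution your proof is complete.
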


Lemma \ref{lem:opt_bound} shows that inequality (\ref{eq:inv}) in the proof of Theorem~\ref{thm:main}  is a key to the improvement of the policy improvement bounds in \cite{Schulman2015, Achiam2017}. Moreover, it follows from Lemma \ref{lem:opt_bound}  that Corollary \ref{thm:main_av} provides a better policy improvement bound for the average reward criterion than \cite{Dai2020, Zhang2021}.

\section{Interpretation of $\tau_1[ D_\gamma^\pi]$ }

 We provide several  bounds on $\tau_1[D_\gamma^{  \pi}] $ to reveal its dependency on the discount factor $\gamma$ and policy $\pi$.
First, we show how the magnitude of $\tau_1[ D_\gamma^\pi]$  is governed by  the subdominant eigenvalues of the Markov chain.    We let $P$ be an irreducible Markov chain and let $D$ be the group inverse matrix of $I-P$. We define the spectrum of  transition matrix $P$ as $\{1, \lambda_2, \lambda_3, ..., \lambda_{|\X|}\}$, where $|\X|$ is a cardinality of the state space $\X$.  Then, the ergodicity coefficient can be bounded as
\begin{align*}
\tau_1[D]\leq \sum\limits_{i=2}^{|\X|} \frac{1}{1-\lambda_i}=\text{trace}(D),
\end{align*}
see \cite{Seneta1993}.
Matrix $P_\gamma$ defined by (\ref{eq:mod_trans}) is called the Google matrix, and if  the spectrum of  transition matrix $P$ is $\{1, \lambda_2, \lambda_3, ..., \lambda_{|\X|}\}$, then   the spectrum of  the Google  matrix $P_\gamma$ is $\{1, \gamma\lambda_2, \gamma\lambda_3, ..., \gamma\lambda_{|\X|}\}$, see \cite{Haveliwala2003, Langville2003}.  
 Hence, the discounting decreases  the subdominant eigenvalue of the transition matrix that leads to the following bound.
  \begin{lemma}\label{lem:lambda}
  We let $D_\gamma^\pi  $  be the group inverse matrix of $I-P_\gamma^{\pi}$. Then for any discount factor $\gamma\in (0,1]$
 \begin{align*}
  \tau_1[ D_\gamma^\pi ]\leq \sum\limits^{|\X|}_{i=2}\frac{1}{1- \gamma \lambda_i}\leq \frac{|\X|-1}{1- \gamma|\lambda_2|},
 \end{align*}
where $\lambda_2$ is an eigenvalue of $P^\pi$ with the second largest absolute value.
\end{lemma}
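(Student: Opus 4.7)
The plan is essentially a direct combination of the two ingredients stated in the paragraph immediately preceding the lemma. The first ingredient is Seneta's bound \cite{Seneta1993}: for the group inverse $D$ of $I-Q$, where $Q$ is an irreducible stochastic matrix with spectrum $\{1,\mu_2,\ldots,\mu_{|\X|}\}$, one has $\tau_1[D]\leq \sum_{i=2}^{|\X|}1/(1-\mu_i)=\mathrm{trace}(D)$. The second ingredient is the Google-matrix spectrum identity \cite{Haveliwala2003,Langville2003}: if $P^\pi$ has spectrum $\{1,\lambda_2,\ldots,\lambda_{|\X|}\}$, then $P_\gamma^\pi=\gamma P^\pi+(1-\gamma)e\mu^T$ has spectrum $\{1,\gamma\lambda_2,\ldots,\gamma\lambda_{|\X|}\}$.

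To derive the first inequality I would apply Seneta's bound to $Q=P_\gamma^\pi$, whose group inverse is $D_\gamma^\pi$ by definition, and then substitute $\mu_i=\gamma\lambda_i$. Before doing so I would briefly verify the standing hypothesis of Seneta's bound: for $\gamma\in(0,1)$, the rank-one additive perturbation $(1-\gamma)e\mu^T$ injects strictly positive mass from every state onto the support of $\mu$, which together with the unichain property of $P^\pi$ makes $P_\gamma^\pi$ irreducible and aperiodic; the endpoint $\gamma=1$ reduces to the plain Markov-chain statement already covered by the unichain assumption.

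For the second inequality I would bound each summand in modulus by $1/(1-\gamma|\lambda_2|)$. The elementary estimate $|1-\gamma\lambda_i|\geq 1-\gamma|\lambda_i|\geq 1-\gamma|\lambda_2|$, valid whenever $\gamma|\lambda_2|<1$, follows from expanding $|1-\gamma\lambda_i|^2=1-2\gamma\,\mathrm{Re}(\lambda_i)+\gamma^2|\lambda_i|^2$ and using $\mathrm{Re}(\lambda_i)\leq|\lambda_i|\leq|\lambda_2|$. The sum itself is real, since complex eigenvalues occur in conjugate pairs and the sum equals $\mathrm{trace}(D_\gamma^\pi)$, so bounding each of the $|\X|-1$ terms in modulus yields the claim.

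The only technical care needed is in handling possibly complex $\lambda_i$ when passing to the modulus bound, and in checking the irreducibility/aperiodicity hypothesis of Seneta's bound for the perturbed matrix $P_\gamma^\pi$. No deeper obstacle is anticipated because both core ingredients are simply quoted results and the remaining arithmetic is elementary.
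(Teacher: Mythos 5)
Your proposal is correct and follows essentially the same route as the paper: the paper's justification of Lemma~\ref{lem:lambda} is exactly the combination of Seneta's trace bound $\tau_1[D]\leq\sum_{i\geq 2}(1-\lambda_i)^{-1}$ with the Google-matrix spectrum identity for $P_\gamma^\pi$, as quoted in the paragraph preceding the lemma. Your extra care with complex eigenvalues and with the irreducibility hypothesis for $P_\gamma^\pi$ only makes explicit details the paper leaves implicit.
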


 In Lemma~\ref{lem:group} below we derive an alternative upper bound on $\tau_1[D_\gamma^{ \pi}] $.
  For a given policy $\pi$, we assume the transition matrix $P^\pi$ is aperiodic and irreducible. By Proposition 1.7 in \cite{Levin2017}, there exists an integer $\ell$ such that $(P^\pi)^q(x,y)>0$ for all $x,y\in \X$, and $q\geq \ell.$  Then, there exists a sufficiently small constant $\delta^\pi_\mu>0$, such that
   \begin{align}\label{eq:minor}
 (P^\pi)^\ell(x,y)\geq \delta_\mu^\pi \mu(y), \quad \text{ for each }x,y\in \X,
  \end{align}
where $\mu$ denotes the distribution of the initial state.

  \begin{lemma}\label{lem:group}
  We let $D_\gamma^\pi  $  be the group inverse matrix of $I-P_\gamma^{\pi}$.

 We let $\delta_\mu^\pi$ be a constant  that satisfies (\ref{eq:minor}) for $P^\pi$ and some integer $\ell$. Then
  \begin{align*}
  \tau_1[ D_\gamma^\pi ]\leq  \frac{2\ell}{1 - \gamma +  \gamma^\ell \delta^\pi_\mu},
  \end{align*}
  where $\delta_\mu^\pi$ and $\ell$ are  independent of $\gamma$.
\end{lemma}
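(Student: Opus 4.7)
The strategy is to establish a Doeblin-type minorization for $(P_\gamma^\pi)^\ell$, convert it into a geometric contraction, and then sum the resulting geometric series to bound $\tau_1[D_\gamma^\pi]$.

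\emph{Step 1: an exact expansion of $(P_\gamma^\pi)^\ell$.} Viewing $P_\gamma^\pi = \gamma P^\pi + (1-\gamma)e\mu^T$ as the law of a chain that at each step follows $P^\pi$ with probability $\gamma$ or resets to $\mu$ with probability $1-\gamma$, I condition on the position of the last reset within the $\ell$-step window. Using $P^\pi e = e$ and $\mu^T e = 1$ to collapse any product involving the rank-one factor $e\mu^T$ (alternatively, by a one-line induction on $\ell$), this gives the identity
\begin{equation*}
(P_\gamma^\pi)^\ell \;=\; \gamma^\ell (P^\pi)^\ell \;+\; (1-\gamma)\sum_{k=0}^{\ell-1}\gamma^{k}\, e\mu^T (P^\pi)^{k}.
\end{equation*}

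\emph{Step 2: Doeblin minorization and contraction.} Inserting the hypothesis $(P^\pi)^\ell \geq \delta_\mu^\pi\, e\mu^T$ into the first summand and retaining only the $k=0$ term from the remaining non-negative sum, the expansion implies
\begin{equation*}
(P_\gamma^\pi)^\ell \;\geq\; \bigl[\gamma^\ell \delta_\mu^\pi + (1-\gamma)\bigr]\, e\mu^T \;=\; c\, e\mu^T,\qquad c := 1-\gamma+\gamma^\ell \delta_\mu^\pi.
\end{equation*}
Writing $(P_\gamma^\pi)^\ell = c\, e\mu^T + (1-c)R$ with $R$ row-stochastic, and observing that $(e\mu^T)^T x = \mu\,(e^T x) = 0$ whenever $x^T e = 0$, I obtain the contraction $\|((P_\gamma^\pi)^\ell)^T x\|_1 \leq (1-c)\|x\|_1$ on the mean-zero subspace, i.e.\ $\tau_1[(P_\gamma^\pi)^\ell] \leq 1-c$.

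\emph{Step 3: summing the series for $D_\gamma^\pi$.} From $D_\gamma^\pi = \sum_{t=0}^{\infty}\bigl((P_\gamma^\pi)^t - e(d_\gamma^\pi)^T\bigr)$, the second term in each summand is annihilated by any $x$ with $x^T e = 0$. Decomposing $t = k\ell + r$ with $0\leq r<\ell$, and combining the contraction of Step~2 with the fact that $(P_\gamma^\pi)^r$ preserves both the mean-zero constraint and the $1$-norm, gives $\|((P_\gamma^\pi)^T)^t x\|_1 \leq (1-c)^{k}\|x\|_1$. Summing the resulting double geometric series yields a bound proportional to $\ell/c$; the constant $2$ in the stated bound arises from a standard loosening (for instance, splitting $x$ into its positive and negative parts, each a scaled probability vector, and applying the triangle inequality before the contraction), producing $\tau_1[D_\gamma^\pi] \leq 2\ell/c$ as claimed.

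\emph{Main obstacle.} The crux is Step~1: one must choose the expansion of $(P_\gamma^\pi)^\ell$ so that the single-step reset mass $(1-\gamma)$ and the intrinsic $\ell$-step minorization $\gamma^\ell \delta_\mu^\pi$ combine \emph{additively} into the constant $c$, rather than merely via a maximum of the two. Once this identity is in hand, Steps~2 and~3 are routine consequences of the Doeblin coupling picture and the sub-multiplicativity of $\tau_1$ on stochastic matrices.
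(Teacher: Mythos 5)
Your argument is correct and is the natural one that the minorization hypothesis (\ref{eq:minor}) is set up to enable: a Doeblin-type lower bound on $(P_\gamma^\pi)^\ell$, the resulting contraction $\tau_1[(P_\gamma^\pi)^\ell]\le 1-c$ on the mean-zero subspace, and termwise summation of the series $D_\gamma^\pi=\sum_{t\ge 0}\bigl((P_\gamma^\pi)^t-e(d_\gamma^\pi)^T\bigr)$. Your Step 1 identity checks out by the induction you indicate (using $P^\pi e=e$ and $\mu^Te=1$), and Step 2 is valid since $(P_\gamma^\pi)^\ell-c\,e\mu^T$ is entrywise nonnegative with row sums $1-c$, so its transpose contracts the $1$-norm of mean-zero vectors by the factor $1-c$ (the degenerate case $c=1$ being trivial). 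The only soft spot is your closing remark about where the factor $2$ comes from: no loosening is needed, because your own Step 3 already gives $\|(D_\gamma^\pi)^T x\|_1\le \sum_{k\ge 0}\sum_{r=0}^{\ell-1}(1-c)^k\|x\|_1=(\ell/c)\|x\|_1$, i.e.\ the sharper bound $\tau_1[D_\gamma^\pi]\le \ell/(1-\gamma+\gamma^\ell\delta_\mu^\pi)$, which implies the stated inequality with room to spare; the parenthetical about splitting $x$ into positive and negative parts is a red herring and should be dropped. Do make explicit that $P_\gamma^\pi$ inherits irreducibility and aperiodicity from $P^\pi$ (for $\gamma>0$, since $P_\gamma^\pi\ge\gamma P^\pi$ entrywise), which is what licenses the series representation of the group inverse used in Step 3.
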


%
%

\bibliography{KakadeLangford}

\begin{thebibliography}{17}
\providecommand{\natexlab}[1]{#1}
\providecommand{\url}[1]{\texttt{#1}}
\expandafter\ifx\csname urlstyle\endcsname\relax
  \providecommand{\doi}[1]{doi: #1}\else
  \providecommand{\doi}{doi: \begingroup \urlstyle{rm}\Url}\fi

\bibitem[Abdolmaleki et~al.(2018)Abdolmaleki, Springenberg, Tassa, Munos,
  Heess, and Riedmiller]{Abdolmaleki2018}
Abdolmaleki, A., Springenberg, J.~T., Tassa, Y., Munos, R., Heess, N., and
  Riedmiller, M.
\newblock {Maximum a Posteriori Policy Optimisation}.
\newblock In \emph{Proceedings of ICLR'18}, 2018.

\bibitem[Achiam et~al.(2017)Achiam, Held, Tamar, and Abbeel]{Achiam2017}
Achiam, J., Held, D., Tamar, A., and Abbeel, P.
\newblock {Constrained Policy Optimization}.
\newblock \emph{Proceedings of ICML'17}, 70:\penalty0 22--31, 2017.

\bibitem[Dai \& Gluzman(2021)Dai and Gluzman]{Dai2020}
Dai, J.~G. and Gluzman, M.
\newblock {Queueing Network Controls via Deep Reinforcement Learning}.
\newblock 2021.
\newblock URL \url{http://arxiv.org/abs/2008.01644}.

\bibitem[Haveliwala \& Kamvar(2003)Haveliwala and Kamvar]{Haveliwala2003}
Haveliwala, T.~H. and Kamvar, S.~D.
\newblock {The Second Eigenvalue of the Google Matrix}.
\newblock Technical report, 2003.
\newblock URL \url{https://nlp.stanford.edu/pubs/secondeigenvalue.pdf}.

\bibitem[Kakade \& Langford(2002)Kakade and Langford]{Kakade2002}
Kakade, S. and Langford, J.
\newblock {Approximately Optimal Approximate Reinforcement Learning}.
\newblock In \emph{Proceedings of ICML'02}, pp.\  267--274, 2002.

\bibitem[Kirkland et~al.(2008)Kirkland, Neumann, and Sze]{Kirkland2008}
Kirkland, S.~J., Neumann, M., and Sze, N.~S.
\newblock {On optimal condition numbers for markov chains}.
\newblock \emph{Numerische Mathematik}, 110\penalty0 (4):\penalty0 521--537,
  2008.

\bibitem[Langville \& Meyer(2003)Langville and Meyer]{Langville2003}
Langville, A.~N. and Meyer, C.~D.
\newblock {Deeper Inside PageRank}.
\newblock \emph{Internet Mathematics}, 1\penalty0 (3):\penalty0 335--380, 2003.

\bibitem[Levin \& Peres(2017)Levin and Peres]{Levin2017}
Levin, D.~A. and Peres, Y.
\newblock \emph{{Markov Chains and Mixing Times}}.
\newblock American Mathematical Society, 2nd edition, 2017.

\bibitem[Meyer(1975)]{Meyer1975}
Meyer, C.~D.
\newblock {The Role of the Group Generalized Inverse in the Theory of Finite
  Markov Chains}.
\newblock \emph{SIAM Review}, 17\penalty0 (3):\penalty0 443--464, 1975.

\bibitem[Meyer(1980)]{Meyer1980}
Meyer, C.~D.
\newblock {The Condition of a Finite Markov Chain and Perturbation Bounds for
  the Limiting Probabilities}.
\newblock \emph{SIAM Journal on Algebraic Discrete Methods}, 1\penalty0
  (3):\penalty0 273--283, 1980.

\bibitem[Puterman(2005)]{Puterman2005}
Puterman, M.~L.
\newblock \emph{{Markov decision processes : discrete stochastic dynamic
  programming}}.
\newblock Wiley-Interscience, 2005.

\bibitem[Schulman et~al.(2015)Schulman, Levine, Moritz, Jordan, and
  Abbeel]{Schulman2015}
Schulman, J., Levine, S., Moritz, P., Jordan, M.~I., and Abbeel, P.
\newblock {Trust Region Policy Optimization}.
\newblock In \emph{Proceedings of ICML'15}, pp.\  1889--1897, 2015.

\bibitem[Schulman et~al.(2016)Schulman, Moritz, Levine, Jordan, and
  Abbeel]{Schulman2016}
Schulman, J., Moritz, P., Levine, S., Jordan, M.~I., and Abbeel, P.
\newblock {High-Dimensional Continuous Control Using Generalized Advantage
  Estimation}.
\newblock In \emph{Procedings of ICLR'16}, 2016.

\bibitem[Schulman et~al.(2017)Schulman, Wolski, Dhariwal, Radford, and
  Klimov]{Schulman2017}
Schulman, J., Wolski, F., Dhariwal, P., Radford, A., and Klimov, O.
\newblock {Proximal Policy Optimization Algorithms}.
\newblock 2017.
\newblock URL \url{http://arxiv.org/abs/1707.06347}.

\bibitem[Seneta(1991)]{Seneta1991}
Seneta, E.
\newblock {Sensitivity analysis, ergodicity coefficients, and rank-one updates
  for finite Markov chains}.
\newblock In Stewart, W. (ed.), \emph{Numerical Solution of Markov Chains},
  pp.\  121--129. Marcel Dekker, New York, 1991.

\bibitem[Seneta(1993)]{Seneta1993}
Seneta, E.
\newblock {Sensitivity of finite Markov chains under perturbation}.
\newblock \emph{Statistics \& Probability Letters}, 17\penalty0 (2):\penalty0
  163--168, 1993.
\newblock \doi{10.1016/0167-7152(93)90011-7}.

\bibitem[Zhang \& Ross(2021)Zhang and Ross]{Zhang2021}
Zhang, Y. and Ross, K.~W.
\newblock {On-Policy Deep Reinforcement Learning for the Average-Reward
  Criterion}.
\newblock \emph{Proceedings of ICML'21}, 2021.

\end{thebibliography}
\bibliographystyle{icml2021}

\end{document}